\newtheorem{theorem}{Theorem}
\title{Pretraining Deep Actor-Critic Reinforcement Learning Algorithms \\
With Expert Demonstrations}
\author{
Xiaoqin Zhang, 
Huimin Ma
\\ 
Dept. of EE, Tsinghua University \\
%
xiaoqin-15@mails.tsinghua.edu.cn,
mhmpub@tsinghua.edu.cn
}
\begin{document}

\maketitle

\begin{abstract}
Pretraining with expert demonstrations have been found useful in speeding up the training process of deep reinforcement learning algorithms since less online simulation data is required.
Some people use supervised learning to speed up the process of feature learning, others pretrain the policies by imitating expert demonstrations. However, these methods are unstable and not suitable for actor-critic reinforcement learning algorithms. Also, some existing methods rely on the global optimum assumption, which is not true in most scenarios.
In this paper, we employ expert demonstrations in a actor-critic reinforcement learning framework, and meanwhile ensure that the performance is not affected by the fact that expert demonstrations are not global optimal. We theoretically derive a method for computing policy gradients and value estimators with only expert demonstrations. 
Our method is theoretically plausible for actor-critic reinforcement learning algorithms that pretrains both policy and value functions.
We apply our method to two of the typical actor-critic reinforcement learning algorithms, DDPG and ACER, and demonstrate with experiments that our method not only outperforms the RL algorithms without pretraining process, but also is more simulation efficient.

\end{abstract}

\section{Introduction}

Deep reinforcement learning is a general method that have been successful in solving complex control problems. Mnih et al. in \cite{mnih2015human} combined Q learning with deep neural networks and proved to be successful in image based Atari games.

Policy gradient methods have been proved significantly efficient in both continuous control problems (\cite{sutton1999}, \cite{silver2014deterministic}, \cite{heess2015learning}) and discrete control problems (\cite{silver2016mastering}, \cite{Wang2016}). Among policy gradient methods, actor-critic algorithms are at the heart of many significant advances in reinforcement learning (\cite{bhatnagar2009natural}, \cite{Degris2012}, \cite{Lillicrap2015}, \cite{mnih2016asynchronous}). These algorithms estimate state-action value functions independently,  and proved to be efficient in policy optimization.

However, an enormous number of online simulation data is required for deep reinforcement learning. Hence we attempt to learn from expert demonstrations and decrease the amount of online data required in deep reinforcement learning algorithms.

One of the representative method of learning from expert demonstrations is inverse reinforcement learning. Ng et al. proposed the first inverse reinforcement learning algorithm \cite{Ng2000}, which recovers reward function based on the assumption that the expert policy is the global optimal policy. From recovered reward function, Abbeel et al. are able to propose apprenticeship learning (\cite{abbeel2004apprenticeship}) to train a policy with expert demonstrations and a simulation environment that does not output reward. Apprenticeship learning inspired many similar algorithms (\cite{syed2008game}, \cite{syed2008apprenticeship}, \cite{piot2014boosted}, \cite{ho2016model}), Ho et al. \cite{ho2016generative} proposed a imitation learning  method that merges inverse reinforcement learning and reinforcement learning, hence imitate the expert demonstrations with generative adversarial networks (GANs).

These algorithms proved successful in solving MDP$\backslash$R (\cite{abbeel2004apprenticeship}). However, MDP$\backslash$R is different from original MDP since MDP$\backslash$R environments do not output task based reward data. And for this reason, inverse reinforcement based algorithms attempt to assume the expert demonstrations to be global optimal and imitate the expert demonstrations. In order to learn from expert demonstrations for MDP, alongside with state-of-the-art reinforcement learning algorithms, different frameworks are required.

There are some prior work that attempt to make use of expert demonstrations for reinforcement learning algorithms. Lakshminarayanan et al. \cite{Lakshminarayanan2016} proposed a training method for DQN based on the assumption that expert demonstrations are global optimal, thus pretrain the state-action value function estimators.

Cruz Jr et al. \cite{cruz2017pre} focused on feature extracting for high dimensional, especially image based simulation environments, and proposed a framework for discrete control problems that pretrains the neural networks with classification tasks using supervised learning. The purpose of this pretraining process is to speed up the  training process by trying to extract features of high dimensional states. However, this work is only suitable for image based, discrete action environments, and ignored the fact that expert demonstrations perform better than current learned policies.

The first published version of AlphaGo \cite{silver2016mastering} is one of the most important work that pretrains the neural networks with human expert demonstrations. In this work, a policy network and a value network is used. The value network is trained with on-policy reinforcement learning, and the policy network is pretrained with expert demonstrations using supervised learning, then trained with policy gradient. This work and \cite{cruz2017pre} are quite similar, the role of expert demonstrations is to speed up the feature extraction, and to give policy a warm start. The fact that expert demonstrations perform better is not fully used, and the framework is not extensive enough for other problems and other reinforcement learning algorithms.

In this paper, we propose an extensive framework that pretrains actor-critic reinforcement learning algorithms with expert demonstrations, and use expert demonstrations for both policy functions and value estimators.
We theoretically derive a method for computing policy gradient and value estimators with only expert demonstrations. Experiments show that our method improves the performance of baseline algorithms on both continuous control environments and high-dimensional-state discrete control environments. 


\section{Background and Preliminaries}
In this paper, we deal with an infinite-horizon discounted Markov Decision Process (MDP), which is defined by the tuple $\{S, A, P, r, \rho_0, \gamma\}$. In the tuple, $S$ is a finite set of states, $A$ is a finite set of actions, $P: S\times A \times S \rightarrow \mathbb{R}$ is the transition probability distribution, $r: S \rightarrow \mathbb{R}$ is the reward function, $\rho_0 : S\rightarrow \mathbb{R}$ is the probability distribution of initial state $S_0$, and $\gamma \in (0,1)$ is the discount factor.

A stochastic policy $\pi^s:S\times A \rightarrow \mathbb{R}$ returns the probability distribution of actions based on states, and a deterministic policy $\pi^d: S \rightarrow A$ returns the action based on states. In this paper, we deal with both stochastic policies and deterministic policies, and $a\sim \pi(s)$ means $a\sim \pi^s(a|s)$ or $a = \pi^d(s)$ respectively. Thus the state-action value function $Q^\pi$is:

\begin{equation*}
Q^\pi(s_t,a_t)=\mathbb{E}_{s_{t+1},a_{t+1},...}\left[\sum_{\tau=0}^{\infty}\gamma^\tau r(s_{t+\tau})\right]
\end{equation*}

The definitions of the value function $V^\pi$ and the advantage function $A^\pi$ are:
\begin{gather*}
V^\pi(s_t)=\mathbb{E}_{a_t,s_{t+1},...}\left[\sum_{\tau=0}^{\infty}\gamma^\tau r(s_{t+\tau})\right] \\
A^\pi(s_t,a_t)=Q^\pi(s_t,a_t)-V^\pi(s_t) \\
\end{gather*}

And let $\eta(\pi)$ denote the discounted reward of $\pi$:

\begin{equation*}
\eta(\pi)=\mathbb{E}_{s_{0},a_{0},...}\left[\sum_{t=0}^{\infty}\gamma^t r(s_{t})\right]
\end{equation*}

For future convenience, let $d^\pi(s)$ denote the limiting distribution of states:
\begin{equation*}
d^\pi(s)=\lim_{t\rightarrow\infty}Pr(s_t=s)
\end{equation*}

where in all of the definitions above:
\begin{equation*}
s_0\sim \rho_0(s_0), a_t\sim \pi(s_t), s_{t+1}\sim P(s_{t+1}|s_t,a_t)
\end{equation*}

The goal of actor-critic reinforcement learning algorithms is to maximize the discounted reward, $\eta(\pi)$, to obtain the optimal policy, where we use a parameterized policy $\pi_\theta$. While estimating $\eta(\pi)$ or $\nabla_\theta\eta(\pi_\theta)$ based on simulated samples, many algorithms use a state-action value estimator $Q^w$, to estimate the state-value function $Q^\pi$ for policy function $\pi_\theta$. 

One typical deterministic actor-critic algorithm DDPG (Deep Deterministic Policy Gradient) \cite{Lillicrap2015} uses estimator $Q^w=\hat{Q^\pi}$ to estimate the gradient of an off-policy deterministic discounted reward $\eta_\beta(\pi_\theta)=\sum_{s\in S}d^\beta(s)V^\pi(s)$ \cite{Degris2012}, where $\beta$ is the roll-out policy:

\begin{equation*}
\begin{split}
\nabla_\theta\eta_\beta(\pi)&\approx \mathbb{E}_{s_t\sim d^\beta(s)}\left[\nabla_\theta Q^w(s,a)|_{s = s_t, a\sim \pi_\theta(s_t)}\right]\\
&=\mathbb{E}_{s_t\sim d^\beta(s)}\left[\nabla_{a} Q^w(s,a)|_{s=s_t, a\sim \pi_\theta(s_t)}\pi_\theta'\right]
\end{split}
\end{equation*}

Where $Q^w$ is updated with sampled data from $\pi$ using Bellman equation, $\pi_\theta'=\nabla_\theta\pi_\theta(s)|_{s=s_t}$.

Another off-policy algorithm that has $Q^w$ as an estimator of policy $\pi_\theta$ is ACER (Actor-Critic with Experience Replay) \cite{Wang2016} that optimizes stochastic policy. The algorithm maximizes off-policy deterministic discounted reward $\eta_\beta(\pi_\theta)$ as well, and modifies the off-policy policy gradient $\hat{g}^{acer}=\nabla_\theta\eta_\beta(\pi)$ to:

\begin{equation*}
\begin{split}
& \hat{g}^{acer}= \bar{\rho}_t\nabla_\theta \log\pi_\theta(a_t|s_t)\left[Q^{ret}(s_t,a_t)-V^w(s_t)\right]\\
&+\mathbb{E}_{a\sim \pi_\theta(s_t)}\left(\left[\frac{\rho_t(a)-c}{\rho_t(a)}\right]_+\nabla_\theta\log\pi_\theta(a|s_t)A^w(s_t,a)\right)
\end{split}
\end{equation*}
Where $A^w(s_t,a)=Q^w(s_t,a)-V^w(s_t)$, $\bar{\rho}_t=\min\left\{c,\frac{\pi(a_t,s_t)}{\beta(a_t,s_t)}\right\}$; $\left[x\right]_+=x$ if $x>0$ and is zero otherwise; $V^w(s_t)=\mathbb{E}_{a\sim\pi_\theta(s_t)}(s_t,a)$; $\rho_t(a)=\frac{\pi(a,s_t)}{\beta(a,s_t)}$; $s_t\sim d^\beta(s_t)$ and $a_t\sim \beta(s_t)$; $Q^{ret}$ is the Retrace estimator of $Q^\pi$ \cite{Munos2016}, which can be expressed recursively as follows:

\begin{equation*}
\begin{split}
Q^{ret}(s_t,a_t)=
r_t+\gamma\bar{\rho}_{t+1}\delta_Q(s_{t+1},a_{t+1})+\gamma V^w(s_{+1})
\end{split}
\end{equation*}
where
\begin{equation*}
\delta_Q(s_{t+1},a_{t+1}) = Q^{ret}(s_{t+1},a_{t+1})-Q^w(s_{t+1},a_{t+1})
\end{equation*}

In ACER, state-action value function is updated using $Q^{ret}$ as target, with gradient $g_Q$:

\begin{equation*}
\begin{split}
g_Q=(Q^{ret}(s_t,a_t)-Q^w(s_t,a_t))\nabla_wQ^w(s_t,a_t)
\end{split}
\end{equation*}

In this paper, we will apply our methods with expert demonstrations to DDPG and ACER. 

\section{Expert Based Pretraining Methods}

Suppose there exists an expert policy $\pi^*$ that performs better than $\pi$. We define \emph{perform better} with the following straightforward constraint:

\begin{equation}\label{cons}
\eta(\pi^*) \geqslant \eta(\pi)
\end{equation}

The definition of \emph{perform better} above is based on the fact that the goal of actor-critic RL algorithms is to maximize $\eta(\pi)$. Here the expert policy $\pi^*$ is different from that of IRL \cite{Ng2000}, imitation learning \cite{Ho2016} or LfD \cite{hester2017learning}, since $\pi^*$ here is \emph{not} the optimum policy of the MDPs.

Here we define a demonstration of a policy $\pi$ as a sequence of $(s,a)$ pairs, $\{(s_t,a_t)\}_{t=0,1,2,...}$, sampled from $\pi$.

Actor-critic RL algorithms tend to optimize $\eta(\pi_\theta)$ as the target. Thus pretraining procedures for these algorithms need to estimate $\eta(\pi_\theta)$ as the optimization target using expert demonstrations. Also, from definition (\ref{cons}), we need to estimate $\eta(\pi^*)$ as well.

However, With only demonstrations of expert policy $\pi^*$ and a black-box simulation environment, $\eta(\pi^*)$ and $\eta(\pi_\theta)$ cannot be directly estimated. Hence we introduce Theorem \ref{theo} (see \cite{Schulman2015} and \cite{Kakade2002}).

\begin{theorem}\label{theo}
For two policies $\pi$ and $\pi^*$:
\begin{equation} \label{theoeq}
\eta(\pi^*)-\eta(\pi) = \mathbb{E}_{s^*_0,a^*_0,...\sim \pi^*}\left[\sum_{t=0}^{\infty}\gamma^t A^\pi(s^*_t,a^*_t)\right]
\end{equation}
\end{theorem}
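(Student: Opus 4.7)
The plan is to use the standard telescoping argument that underlies the Kakade--Langford performance difference lemma. The key observation is that because the reward in this setup depends only on the state, the advantage function admits the one-step identity
\[
A^\pi(s_t,a_t) = \mathbb{E}_{s_{t+1}\sim P(\cdot\mid s_t,a_t)}\bigl[r(s_t) + \gamma V^\pi(s_{t+1}) - V^\pi(s_t)\bigr],
\]
which follows by writing $Q^\pi(s_t,a_t) = r(s_t) + \gamma\mathbb{E}_{s_{t+1}}[V^\pi(s_{t+1})]$ and subtracting $V^\pi(s_t)$. This identity is what will let me turn the advantage sum into a sum involving rewards of the expert trajectory and value-function differences.

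Next I would substitute this identity into the right-hand side of \eqref{theoeq} and, using the tower property under trajectories drawn from $\pi^*$, rewrite the expectation as
\[
\mathbb{E}_{\pi^*}\left[\sum_{t=0}^{\infty}\gamma^t r(s^*_t)\right] + \mathbb{E}_{\pi^*}\left[\sum_{t=0}^{\infty}\bigl(\gamma^{t+1} V^\pi(s^*_{t+1}) - \gamma^t V^\pi(s^*_t)\bigr)\right].
\]
The first summand is exactly $\eta(\pi^*)$ by definition. The second summand is a telescoping series: formally, after truncating to $t \le T$ and then letting $T\to\infty$, all intermediate terms cancel and only $-V^\pi(s^*_0)$ survives (the tail $\gamma^{T+1}V^\pi(s^*_{T+1})$ vanishes provided rewards are bounded and $\gamma<1$, which is assumed in the MDP setup).

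Finally, since $s^*_0\sim \rho_0$ is the initial-state distribution shared by both $\pi$ and $\pi^*$, one has $\mathbb{E}_{s^*_0\sim\rho_0}[V^\pi(s^*_0)] = \eta(\pi)$, so the second summand contributes $-\eta(\pi)$ and the identity $\eta(\pi^*) - \eta(\pi)$ drops out.

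The only real subtlety I anticipate is justifying the interchange of the infinite sum with the expectation and the telescoping cancellation at infinity. This is not deep but needs to be mentioned: uniform boundedness of $r$ together with $\gamma\in(0,1)$ gives absolute convergence and an exponentially decaying tail, so Fubini applies and $\gamma^{T+1}\mathbb{E}[V^\pi(s^*_{T+1})]\to 0$. Once this is in place the rest of the proof is a direct algebraic rearrangement.
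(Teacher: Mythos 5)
Your proposal is correct and follows essentially the same route as the paper's proof: the one-step advantage identity $A^\pi(s,a)=\mathbb{E}_{s'}[r(s)+\gamma V^\pi(s')-V^\pi(s)]$, substitution into the discounted sum, telescoping, and the identification $\mathbb{E}_{s_0\sim\rho_0}[V^\pi(s_0)]=\eta(\pi)$. The extra care you take with Fubini and the vanishing tail $\gamma^{T+1}\mathbb{E}[V^\pi(s^*_{T+1})]\to 0$ is a welcome tightening of a step the paper leaves implicit, but it does not change the argument.
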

\begin{proof}
(See also \cite{Schulman2015} and \cite{Kakade2002}) Note that 
\begin{equation*}
A^\pi(s,a)=\mathbb{E}_{s'\sim P(s'|s,a)}\left[r(s)+\gamma V^\pi(s')-V^\pi(s)\right]
\end{equation*}
we have:
\begin{equation*}
\begin{split}
&\mathbb{E}_{s^*_0,a^*_0,...\sim \pi^*}\left[\sum_{t=0}^{\infty}\gamma^t A^\pi(s^*_t,a^*_t)\right]\\
=&\mathbb{E}_{s^*_0,a^*_0,...\sim \pi^*}\left[\sum_{t=0}^\infty \gamma^t(r(s_t)+\gamma V^\pi(s_{t+1})-V^\pi(s_t))\right]\\
=&-\mathbb{E}_{s^*_0\sim \rho_0}\left[V^\pi(s_0)\right]+\mathbb{E}_{s^*_0,a^*_0,...\sim \pi^*}\left[\sum_{t=0}^\infty\gamma^tr(s_t)\right]\\
=&-\eta(\pi)+\eta(\pi^*)
\end{split}
\end{equation*}

\end{proof}

For many actor-critic RL algorithms like DDPG and ACER, policy optimization is based on accurate estimations of state-action value functions or value functions of the learned policy $\pi_\theta$.  Typically, those algorithms use data sampled from $\pi_\theta$, $\{(s_t,a_t,r_t)\}_{t=0,1,2,...}$, to estimate $Q^\pi$ and $V^\pi$. The estimating processes usually need a large amount of simulations to be accurate enough.

Combine Theorem \ref{theo} with constraint (\ref{cons}), we have:

\begin{equation}\label{finalcons}
\mathbb{E}_{s^*_0,a^*_0,...\sim \pi^*}\left[\sum_{t=0}^{\infty}\gamma^t A^\pi(s^*_t,a^*_t)\right]\geqslant 0
\end{equation}

This result links state-action value functions with expert demonstrations, allowing us to apply constraint (\ref{cons}) while training state-action value functions. This constraint is for value estimators, like $Q^w$ and $V^w$. When value estimators are not accurate enough, constraint (\ref{finalcons}) would not be satisfied. Hence if an algorithm update value estimators under constraint (\ref{finalcons}), the estimators would be more accurate, and in result improve the policy optimizing process.

Another pretraining process is policy optimization using expert demonstrations. Like most actor-critic algorithms, we suppose advantage function $A^\pi(s,a)$ is already known while conducting policy optimization. Then we can estimate the update step with expert demonstrations and estimations of value functions.

Considering Theorem \ref{theo}, we estimate he policy gradient as the following:
\begin{equation}\label{policypre}
\begin{split}
&\nabla_\theta \eta(\pi_\theta)\\
=&\nabla_\theta (\eta(\pi_\theta)-\eta(\pi^*))\\
=&-\nabla_\theta\mathbb{E}_{s^*_0,a^*_0,...\sim \pi^*}\left[\sum_{t=0}^{\infty}\gamma^t A^\pi(s^*_t,a^*_t)\right]
\end{split}
\end{equation}

Equation (\ref{policypre}) provides an off-policy policy optimization procedure with data only from expert demonstrations. It turns out that \emph{perform better} is not a must in this procedure for expert policy $\pi^*$. 

Recently, people like to propose sample efficient RL algorithms, like ACER and Q-Prop \cite{Gu2017}, since RL algorithms need a large amount of simulation time while training. With expert demonstrations, since there is no reward data, we cannot conduct sample efficient policy optimization processes. However, when we update policies with (\ref{policypre}), no simulation time is needed. We call  the situation \emph{simulation efficient}, which means the algorithms may need a large amount of data, but need few simulation data while training.

Note that sample efficient algorithms are all simulation efficient algorithms, all of these methods intend to decrease the simulation time. In this paper, we evaluate our method by how simulation efficient it is.

In this section, we found two pretraining methods for actor-critic RL algorithms, namely (\ref{finalcons}) and (\ref{policypre}). Both of them are based on Theorem \ref{theo}. The theorem connects policy discounted reward $\eta(\pi_\theta)$ and expert demonstration data, requiring no reward data from expert trajectories. Equation (\ref{finalcons}) gives a constraint of value function estimators based on the definition of \emph{perform better}, and equation (\ref{policypre}) provides an off-policy method to optimize policy function regardless of how expert demonstrations perform.

\section{Algorithms with Expert Demonstrations}

Theorem \ref{theo} provides a way to satisfy constraint (\ref{cons}) and update policies $\pi_\theta$ with demonstrations of expert policy $\pi^*$, and does not need reward data sampled from $\pi^*$. In this section, we organize the results in Section 3 in a more piratical way, then we apply the pretraining methods to two of the typical actor-critic RL algorithms, DDPG and ACER.

These actor-critic RL algorithms use neural networks $Q^w(s,a)$ to estimate the state-action value functions of policy, $Q^\pi(s,a)$, where $\pi$ is the is the current learned policy while training, which is a parameterized function, $\pi_\theta$, always in the form of artificial neural networks.

For pretraining processes based on Theorem \ref{theo}, we need an estimator of advantage function for policy $\pi_\theta$, $A^\pi(s,a)$. Based on parameterized policy and state-action value function estimator $Q^w$, we obtain the advantage function estimator $A^{w,\theta}$:

\begin{equation}\label{acte}
A^{w,\theta}(s_t^*,a_t^*)=Q^w(s_t^*,a_t^*)-V^{w,\theta}(s_t^*)
\end{equation}
\begin{equation} \label{ve}
V^{w,\theta}(s_t^*)=\mathbb{E}_{a\sim\pi_\theta(s)}Q^w(s_t^*,a)
\end{equation}

Considering the training processes of DDPG and ACER, at the beginning of the processes the policies are nearly random and estimators $Q^w(s,a)$ are not accurate, since there is little data from simulation. Therefore if there exist some expert demonstrations that perform better than initial policies, we can introduce the data using constraint (\ref{finalcons}), in order to obtain a more accurate estimator $Q^w(s,a)$.

If constraint (\ref{finalcons}) is satisfied, then $Q^w(s,a)$ is accurate enough for the fact that $\pi^*$ \emph{performs better}. Hence we update the estimator with expert demonstrations with the following gradient, in which $[x]_+=x$ if $x> 0$, otherwise is zero:

\begin{equation}\label{wg}
g_Q^* = \nabla_w\left[\mathbb{E}_{s^*_0,a^*_0,...\sim \pi^*}\left[\sum_{t=0}^{\infty}\gamma^t A^{w,\theta}(s^*_t,a^*_t)\right]\right]_+
\end{equation}

From equation (\ref{policypre}), we optimize policy with expert demonstrations. Since expert demonstrations do not contain reward data, we can update policy parameters with a simple policy gradient:

\begin{equation}\label{thg}
g_\pi^* = -\nabla_\theta\mathbb{E}_{s^*_0,a^*_0,...\sim \pi^*}\left[\sum_{t=0}^{\infty}\gamma^t A^{w,\theta}(s^*_t,a^*_t)\right]
\end{equation}

For the reason that $\pi^*$ is not the optimal policy of the MDPs, we only train with expert demonstrations for a limited period of time at the beginning of the training process, to guarantee $\pi^*$ performs better than $\pi_\theta$, hence we call the process \emph{pretraining}.

To pretrain actor-critic RL algorithms like DDPG and ACER, we add gradients $g_Q^*$ and $g^*_\pi$ to the original gradients of the algorithms:

\begin{equation}\label{pretrainq}
g_Q^{pre} = g_Q+\lambda_Q g_Q^*\\   
\end{equation}
\begin{equation}\label{pretrainp}
g_\pi^{pre}=g_\pi+\lambda_\pi g_\pi^* 
\end{equation}

Where $g_Q$ and $g_\pi$ are original gradients of baseline actor-critic RL algorithms, and$g^{pre}_Q$ and $g^{pre}_\pi$ are pretraining gradients for estimator $Q^w$ and parameterized policy function $\pi_\theta$ respectively while pretraining. We introduce expert demonstrations to the base algorithms instead of replacing them, since the state-action value functions are estimated with the baseline algorithms and gradient $g_Q^*$ only makes  $Q^w$ satisfy constraint (\ref{cons}).

\subsection{Pretraining DDPG}

DDPG is a representative off-policy actor-critic deterministic RL algorithm. The algorithm is for continuous action space MDPs, and optimizes the policy using off-policy policy gradient.

Two neural networks are used in DDPG at the same time. One is named critic network, which is the state-action value function estimator $Q^w$, and the other is named actor network, which is the parameterized policy $\pi_\theta$. Since it is an algorithm for deterministic control, the input of the actor network is a state of MDPs, and the output is the corresponding action.

Two neural networks are trained simultaneously, with gradients $g_Q$ and $g_\pi$ respectively. $g_Q$ is based on Bellman equation, and $g_\pi$ is the off-policy policy gradient.

In order to introduce expert demonstrations for pretraining critic network and actor network, we apply (\ref{pretrainq}) and  (\ref{pretrainp}) to pretrain the two neural networks.

\begin{figure}
	\centering
	\includegraphics[width=3.4in]{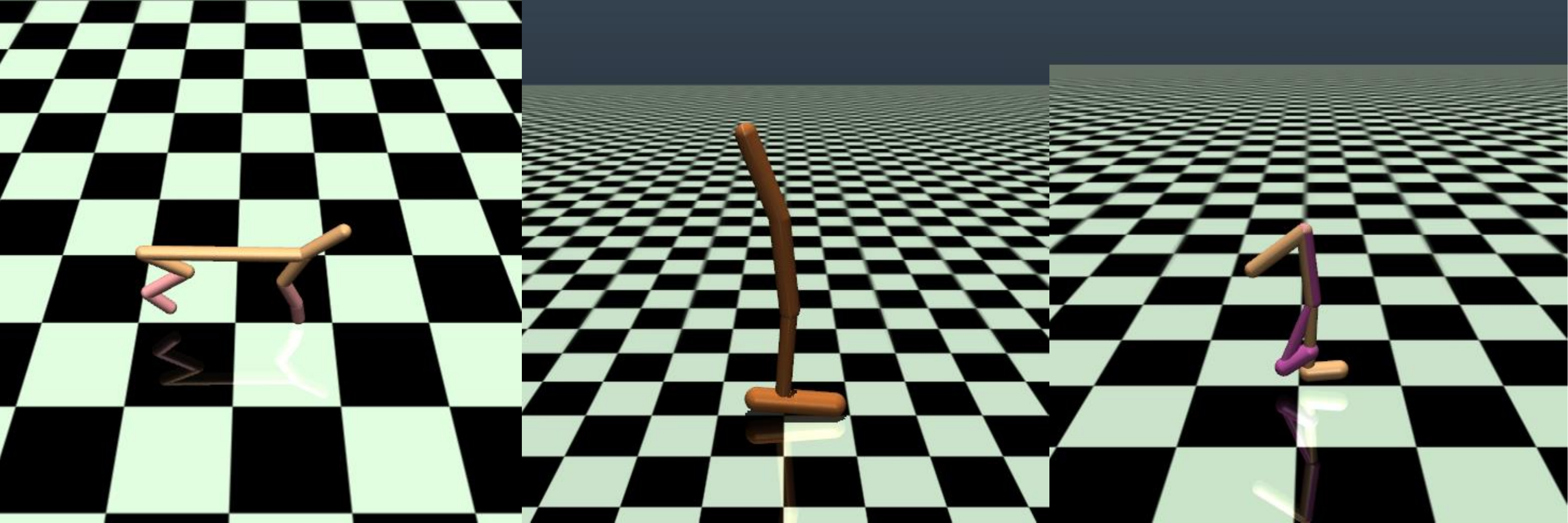}
	\caption{Example screenshots of MuJoCo simulation environments that we attend to experiment on with DDPG as baseline. The tasks are: HalfCheetah (left), Hopper (middle), and Walker2d (right).}
	\label{mujoco}
\end{figure}

\begin{figure*}
	\centering
	\includegraphics[width=7.1in]{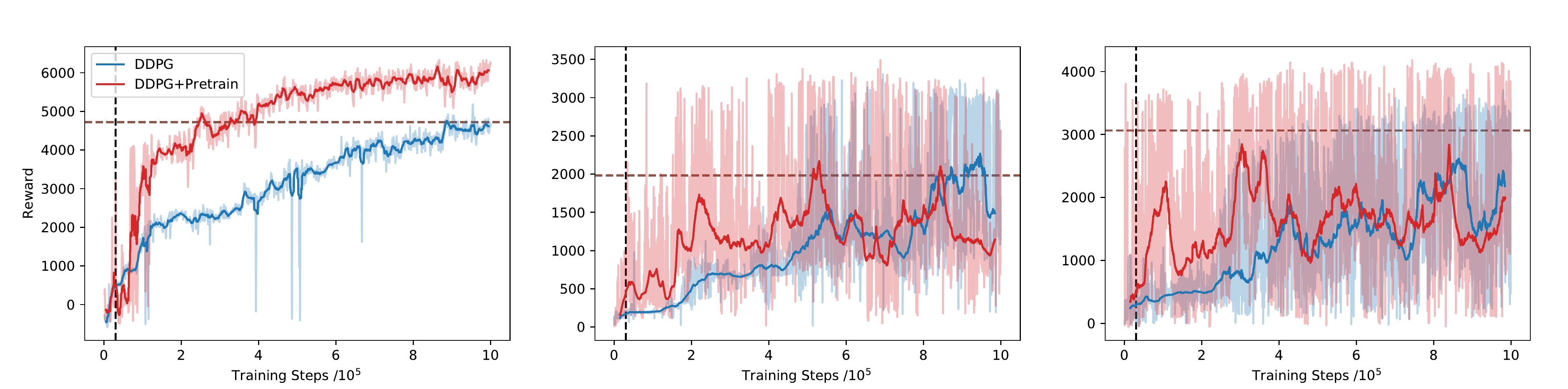}
	\caption{Results of pretraining based on DDPG. The figures each is a different task, and they are respectfully experimented on HalfCheetah (left), Hopper (middle) and Walker2d (right), The vertical dashed black lines represent the points when pretraining end, and the horizontal dashed brown lines represent the average episode reward of expert demonstrations. The transparent blue and red lines are original training results, and the  opaque lines are smoothed lines with sliding windows.}
	\label{ddpg_result}
\end{figure*}

Note that for a deterministic policy $\pi_\theta$, equation (\ref{ve}) becomes $V^{w,\theta}(s)=Q^w(s,\pi_\theta(s))$.

\subsection{Pretraining ACER}

ACER is an off-policy actor-critic stochastic RL algorithm, which modifies the policy gradient to make the process sample efficient. ACER solves both discrete control problems and continuous control problems.

For discrete control problems, a double-output convolutional neural work (CNN) is used in ACER. One output is a softmax policy $\pi_\theta$, and the other is $Q^w$ values. Although $\theta$ and $w$ share most of the parameters, they are updated separately with different gradients. 

For stochastic control problems, a new structure named Stochastic Dueling Networks (SDNs) is used for value function estimation. The network outputs a deterministic value estimation $V^{w}(s)$, and a stochastic state-action value estimation $Q^{w,\theta}(s,a)\sim V^w(s)+A^w(s,a)-\frac{1}{n}\sum_{i=1}^nA^w(s,\dot{a})|_{\dot{a}\sim \pi_\theta}$. Hence equation (\ref{acte}) becomes $A^{w,\theta}(s_t^*,a_t^*)=Q^{w,\theta}(s_t^*,a_t^*)-V^{w}(s_t^*)$.

In ACER, gradient $g_\pi$ is the modified policy gradient, and  $g_Q$ is based on Retrace. Both of the gradients are explained in Section 2.

Policy gradient is estimated using trust region in ACER, but in this paper, we compute pretraining gradients $g^*_Q$ and $g^*_\pi$ directly with expert demonstrations.

\section{Experiments}

We test our algorithms based on DDPG and ACER on various environments, in order to investigate how \emph{simulation efficient} the pretraining methods are. The baselines are DDPG and ACER without pretraining.

Because of the existence of $[x]_+$, $g_Q^*$ defined in (\ref{wg}) could be infinity sometimes. Hence we clip the gradient during pretraining. We set $\lambda_Q$ and $\lambda_\pi=1$ in equations (\ref{pretrainq}) and (\ref{pretrainp}).

The expert policies that generate expert demonstrations are policies trained with baseline algorithms, i.e. DDPG and ACER.

With DDPG as baseline, we apply our algorithm to low dimensional simulation environments using the MuJoCo physics engine \cite{todorov2012mujoco}, and test on tasks with action dimensionality are: HalfCheetah (6D),  Hopper (3D), and Walker2d (6D). These tasks are illustrated in Figure \ref{mujoco}.

All the setups with DDPG as baseline share the same network architecture that compute policies and estimate value functions referring to \cite{Lillicrap2015}. Adam \cite{kingma2014adam} is used for learning parameters and the learning rate of actor network and critic network are respectively $10^{-3}$ and $10^{-4}$. For critic network, $L_2$ weight decay of  $10^{-2}$ is used with $\gamma=0.99$. Both actor network and critic network have 2 hidden layers with 400 and 300 units respectively.

The results of our pretraining method based on DDPG are illustrated  in Figure \ref{ddpg_result}. In the figures, the horizontal dashed brown lines represent the average episode reward of expert demonstrations. It is obvious that the expert demonstrations are not global optimal demonstrations, and in order to guarantee the expert policies \emph{perform better} than learned policies, the pretraining process stops early with 30000 training steps and 60000 simulation steps.

As shown in Figure \ref{ddpg_result}, it is obvious that DDPG with our pretraining method outperforms initial DDPG. Results on HalfCheetah (Figure \ref{ddpg_result} left) is representative and clear, pretraining process gives training a warm start, and after pretraining stops, the performance drops because of the new learning gradient. However, after pretraining, DDPG learns faster than the baseline, hence it outperforms initial DDPG. Although the results of DDPG are unstable on Hopper (Figure \ref{ddpg_result} middle) and Walker2d (Figure \ref{ddpg_result} right), smoothed results indicate that DDPG with pretraining processes learns faster than DDPG.

\begin{figure}
\centering
\includegraphics[width=3.4in]{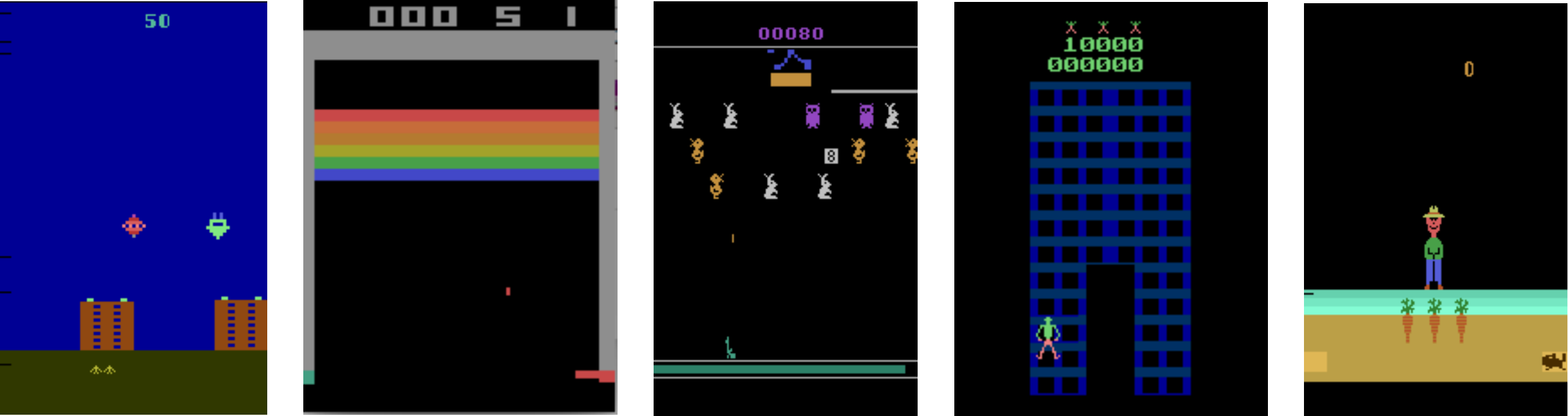}
\caption{Example screenshots of Atari simulation environments that we attend to experiment on with ACER as baseline. The tasks from left to right are: AirRaid, Breakout, Carnival, CrazyClimber and Gopher.}
\label{atari}
\end{figure}

\begin{figure*}
	\centering
	\includegraphics[width=7.1in]{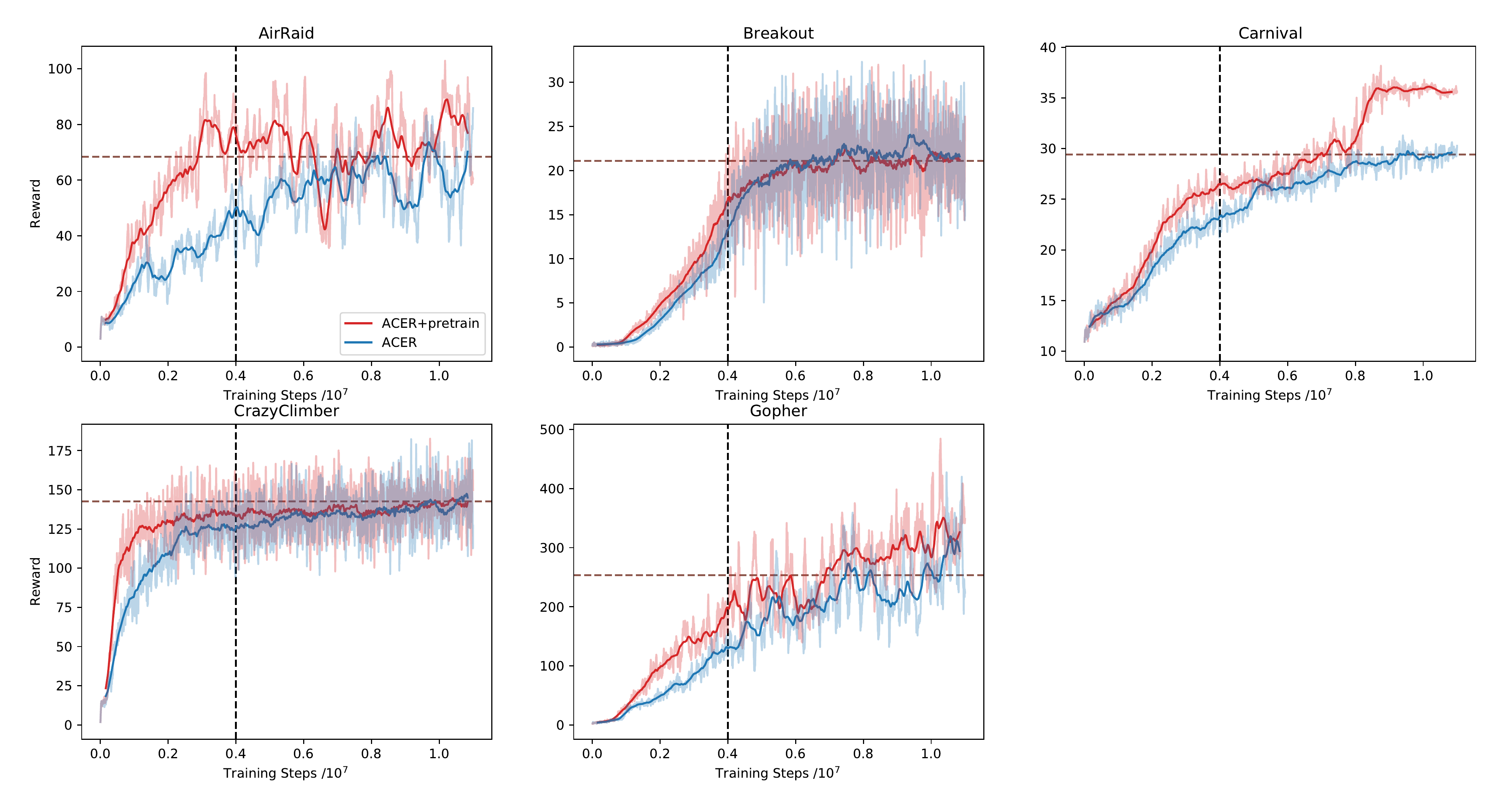}
	\caption{Results of pretraining based on ACER with trust region update. Similar to Figure \ref{ddpg_result}, the vertical dashed black lines are the points when pretraining end, and the horizontal dashed brown lines are the average episode reward of expert demonstrations. The transparent red and blue lines represent the original training results, and the opaque ones are smoothed results with sliding windows.}
	\label{acer_result}
\end{figure*}

With ACER as baseline, we apply our algorithm to image based Atari games. We only tested on discrete control problems with ACER, and the environments we tested on are: AirRaid, Breakout, Carnival, CrazyClimber and Gopher. The environments are illustrated in Figure \ref{atari}.

The experiment settings are similar to \cite{Wang2016}, The double-output network consists of a convolutional layer with 32 $8\times 8$ kernels with stride 4, a convolutional layer with 64 $4\times 4$ kernels with stride 2, a convolutional layer with 64 $3\times 3$ kernels with stride 1, followed by a fully connected layer with 512 units. The network outputs a softmax policy and state-action value Q for every action.

Because of the limitation of memory, each thread of ACER only have a replay memory of 5000 frames, which is the only different setting from \cite{Wang2016}. Entropy regularization with weight 0.001 is also adopted, and the discount factor $\gamma=0.99$, importance weight truncation $c=10$. Trust region updating is used as described in \cite{Wang2016}, and all the settings of trust region update remain the same. ACER without trust region update is not tested in this paper.

The results of our pretraining method based on ACER with trust region update is illustrated in Figure \ref{acer_result}. All of the environments are image based Atari games. All the lines have the same meaning as Figure \ref{ddpg_result}, and it is obvious that ACER with pretraining process outperforms initial ACER. 

Unlike DDPG, the performance of learned policies does not fall after pretraining process ends. This is because for stochastic discrete control, a random policy and a random state-action value estimator always satisfies constraint (\ref{cons}), hence $g_Q^*$ defined in (\ref{wg}) is always zero, and $g_\pi^*$ defined in (\ref{thg}) is policy gradient based on expert demonstrations, similar to original $g_\pi$ from baseline ACER, therefore the performance of learned policies does not fall after pretraining.

Note that learning with expert demonstrations use the same amount of simulation steps as baseline algorithms, our pretraining method is more \emph{simulation efficient} than baselines.

\section{Conclusion}

In this work, we propose an extensive method that pretrains actor-critic reinforcement learning methods. Based on Theorem \ref{theo}, we design a method that takes advantage of expert demonstrations. Our method does not rely on the global optimal assumption of expert demonstrations, which is one of the key differences between our method and IRL algorithms. Our method pretrains policy function and state-action value estimators simultaneously with gradients (\ref{pretrainq}) and (\ref{pretrainp}). With experiments based on DDPG and ACER, we demonstrate that our method outperforms the raw RL algorithms.

One limitation of our framework is that it has to estimate the advantage function for expert demonstrations, and the framework is not suitable for algorithms like A3C \cite{mnih2016asynchronous} and TRPO \cite{Schulman2015} that only maintain a value estimator $V^w(s)$. On the other hand, the fact that expert demonstrations \emph{perform better} is not considered during pretraining of policies (Equation (\ref{thg})). We left these extensions in our future work.

\section*{Acknowledgments}

This work was supported by National Key R\&D Program of China (No. 2016YFB0100901), and National Natural Science Foundation of China (No. 61773231).

\bibliographystyle{named}
\bibliography{ijcai18}

\end{document}